\newcolumntype{M}[1]{>{\centering\arraybackslash}m{#1}}
\def\BibTeX{{\rm B\kern-.05em{\sc i\kern-.025em b}\kern-.08em
    T\kern-.1667em\lower.7ex\hbox{E}\kern-.125emX}}
\newtheorem{prop}{Proposition}
\begin{document}

\title{Fast OT for Latent Domain Adaptation
\thanks{Identify applicable funding agency here. If none, delete this.}
}

\author{\IEEEauthorblockN{Siddharth Roheda$^\star$, Ashkan Panahi$^\dagger$, Hamid Krim$^\star$}
\IEEEauthorblockA{$^\star$ \textit{Electrical and Computer Engineering Department, North Carolina State University} \\
\{sroheda, ahk\}@ncsu.edu\\
$^\dagger$ \textit{Dept. of Computer Science and Engineering, Chalmers University}\\
ashkan.panahi@chalmers.se
}}


\maketitle

\begin{abstract}
In this paper, we address the problem of unsupervised Domain Adaptation. The need for such an adaptation arises when the distribution of the target data differs from that which is used to develop the model and the ground truth information of the target data is unknown. We propose an algorithm that uses optimal transport theory with a verifiably efficient and implementable solution to learn the best latent feature representation. This is achieved by minimizing the cost of transporting the samples from the target domain to the distribution of the source domain.
\end{abstract}

\begin{IEEEkeywords}
Optimal Transport, Unsupervised Domain Adaptation
\end{IEEEkeywords}

\section{Introduction}
Adapting a classifier trained on a source domain to recognize instances from a new target domain is an important problem of increasing research interest \cite{da_survey1, da_survey2, da_}. Difficulties often arise in practice, as is the case when the data is different from that which is used to train a model. Specifically, consider an inference problem where a model is learned using a certain source domain $X_s$ with the corresponding labels $Y_s$ and is used to classify samples from the target domain $X_t$ with the corresponding labels $Y_t$. Domain adaptation is required when $P(Y_s|X_s) \approx P(Y_t|X_t)$, but $P(X_s)$ is significantly different from $P(X_t)$. 

Such a shift in data distribution is seen and addressed in almost every field ranging from Natural Language Processing (NLP) to Object Recognition. Given labeled samples from a source domain, there are two groups that any Domain Adaptation (DA) approach can be classified into, i) semi-supervised DA: some samples in the target domain are labeled or ii) unsupervised DA: none of the samples in the target domain are labeled. 

Several works \cite{tda1, tda2, tda3} have demonstrated the effects of the divergence between the probability distributions of domains.These works have led to solutions of transforming the data from the target domain so as to make the associated distribution as close as possible to that of the source domain. This allows the application of the classifier trained on the source domain to classify data from the target domain post transformation. In \cite{m3da} an approach for multi-source domain adaptation was proposed to transfer knowledge learned from multiple labeled sources to a target domain by aligning moments of their feature distributions, while \cite{advda} uses a GAN to learn the transformation from the target domain to source domain. In \cite{coral1, coral2}, the authors simply align the second order statistics of the source and target domains.

\textbf{Contributions:} In this paper, we address the problem of unsupervised DA. We build on the existing works having led to various techniques including recent  generative adversarial networks \cite{gan},  to rather propose Optimal Transport for some of its advantages as a viable path to adapt the model toward classifying the target domain data. We first seek the latent representations of source and target domains to subsequently minimize the optimal transport cost. These representations for the source and target can be classified using a common classifier trained on the source data. Furthermore, we also demonstrate that it is also crucial to ensure optimal performance that $P(\hat{Y}_s|X_s) \approx P(\hat{Y}_t|X_t)$, where $\hat{Y}_s$ and $\hat{Y}_t$ are the predictions made by the classifier on the source and target domain respectively.
\section{Related work}

\subsection{Generative modeling}

The Generative Adversarial Network was first introduced by Goodfellow et al. \cite{gan} in 2014. In this framework, a generative model is pitted against an adversary: the discriminator. The generator aims to deceive the discriminator by synthesizing realistic samples from some underlying distribution. The discriminator on the other hand, attempts to discriminate between a real data sample and that from the generator. Both models are approximated by neural networks. When trained alternatively, the generator learns to produce random samples from the data distribution which are very close to the real data samples. Following this, Conditional Generative Adversarial Networks (CGANs) were proposed in \cite{cgans}. These networks were trained to generate realistic samples from a class conditional distribution, by replacing the random noise input to the generator by some useful information. As a result, the generator now aims to generate realistic data samples, when given the conditional information. CGANs have been used to generate random faces when given facial attributes \cite{face_gan} as well as to produce relevant images given text descriptions \cite{texttoim}.  

Many works have recently attempted to use GANs for performing domain adaptation. In \cite{advda} the authors use the generator to learn the features for classification and the  discriminator to differentiate between the source and target domain features produced by the generator. Figure \ref{CGAN} depicts the block diagram for this approach. In \cite{cycitoi} a cyclic GAN was used to perform image translation between unpaired images. In \cite{cycada} a cyclic GAN was implemented to adapt semantic segmentation of street images from GTA5 to CityScapes data. 
\begin{figure}
	\centering
	\includegraphics[width=0.3\textwidth]{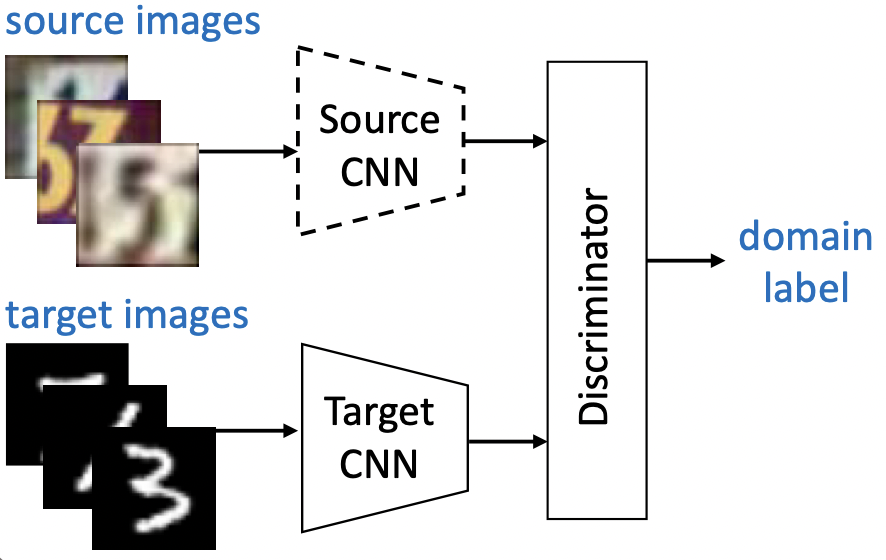}
	\caption{Adversarial Adaptation}
	\label{CGAN}
\end{figure}

\subsection{Optimal Transport}
Optimal Transport \cite{vilani} is a pointwise comparative analytical tool that provides a distance measure between two probability distributions. The distance measure is based on a cost $c(\cdot, \cdot)$ which is imputed to transporting a source distribution to a target distribution. Formally, given two densities $\mu_s$ and $\mu_t$ on two measureable spaces $\mathcal{X}_s$ and $\mathcal{X}_t$, the Kantorovich-Monge relaxation/formulation\footnote{We refer the reader to the vast literature retracing the reformulation Monge's original problem, and a very readable resource is the manuscript by Cuturi and Peyre \cite{Cut-Pey}} of the optimal transport problem entails finding a transport plan which is a probabilistic coupling $\gamma^\star$ defined over $\mathcal{X}_s \times \mathcal{X}_t$ such that,

\begin{equation}
	arg \min_{\gamma \in \Gamma} \int_{\mathcal{X}_s \times \mathcal{X}_t} c(x_s,x_t) d\gamma(x_s,x_t), 
\end{equation}

where $c: \mathcal{X}_s \times \mathcal{X}_t \to [0 , +\infty]$ and $c(x,y)$ denotes the cost of transporting a unit of mass from $x$ to $y$. $\gamma^\star (x,y)$ denotes the coupling that provides the minimum $\mathbb{E}_{x,y \texttildelow \gamma^\star [c(x,y)] }$.

In most practical applications, one has access only to the samples of the distribution where discrete measures $\mu_s = \sum_{i=1}^{N_s} p_{s_i} \delta_{x_{s_i}}$ and $\mu_t = \sum_{i=1}^{N_t} p_{t_i} \delta_{x_{t_i}}$, where $\delta_{x_{s_i}}$, $p_{s_i}$ and $\delta_{x_{t_i}}$, $p_{t_i}$ denote the Dirac function and the probability mass at $x_{s_i} \in \mathcal{X}_s$ and $x_{t_i} \in \mathcal{X}_t$, respectively. The optimal transport plan under the discrete case is the solution to a linear programming problem which is defined as follows,
\begin{equation}
	\gamma^\star = arg \min_{\gamma \in \Gamma} <\bm{C}, \gamma> = 	arg \min_{\gamma \in \Gamma} \sum_{i=1}^{N_s} \sum_{j=1}^{N_t} \gamma_{ij} C_{ij},
\end{equation}
where $\bm{C} \geq \bm{0}$ is the cost matrix with $C_{ij} = || (x_{s_i} - x_{t_i}) ||_2^2$ and
\begin{equation}
	\Gamma = \{ \gamma \in \mathbb{R}_+^{N_s \times N_t} | \gamma \bm{1}_{N_s} = \mu_s, \gamma^T\bm{1}_{N_t} = \mu_t \}.
\end{equation}
is the set of probabilistic coupling matrices and $\mathbf{1_{\cdot}}$ is a vector of ones of appropriate dimension. 

\section{Problem Formulation}
Consider data from a source domain, $X_s = \{x_{s_i}\}_{i=1,...N}$ with a corresponding set of labels $Y_s = = \{y_{s_i}\}_{i=1,...N}$, where N is the total number of samples in the dataset. Let $g_s: X_s \to L_s$ be a function that transforms the data into a latent feature space, $L_s = g_s(X_s)$. Following this, a classifier function $f(.)$ is used to assign a labels to the data samples, $\hat{Y}_s = f(L_s) = f(g_s(X_s))$. If the classifier is well trained, $\hat{Y}_s \approx Y_s$.

Now, consider target domain data $X_t$ for which the ground truth labels are unavailable. One may consider using the classifier trained on $X_s$ to classify the data $X_t$ if similar classes as in the source domain are of interest. Such a procedure would yield optimal performance if and only if the distributions of $X_s$ and $X_t$ are the same. This usually fails to be the case in practical applications, and hence resulting in sub optimal classification performance.

In order to mitigate this problem, Domain Adaptation (DA) is required. Note that our goal here is to take on the classification problem where labels for the target distribution are completely unknown, and hence to learn the function $g_t: X_t \to L_t$ such that $Y_t = f(g_t(X_t))$ leads to optimal classification performance in the absence of any information about the target domain.

\section{Proposed Approach}
\begin{figure*}
	\centering
	\includegraphics[width = 0.65\textwidth]{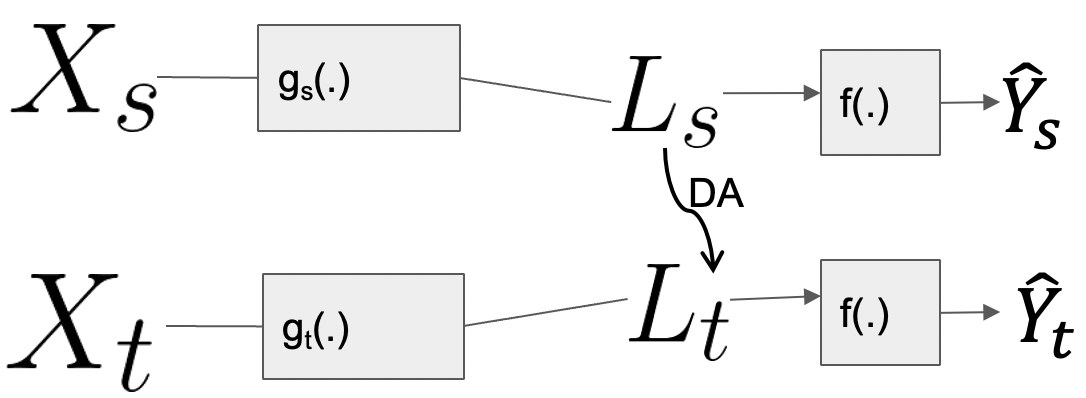}
	\centering
	\caption{Block diagram for the proposed approach}
	\label{bd_otda}
\end{figure*}
As noted in the previous section, the inference model must be optimal for the source domain. In order to ensure this, we propose to learn the functions $g_s(.)$ and $f(.)$ so that they minimize the cross entropy loss between the ground truth labels, $Y_s$ and those predicted by the model, $\hat{Y}_s = f(g_s(X_s))$,
\begin{equation}
	\min_{f, g_s}  C_{Loss} (Y_s, f(g_s(X_s))) ,
\end{equation}
where, $ C_{Loss}(Y_s, f(g_s(X_s))) = \sum_{i = 1}^{N} -y_{s_i} \log f(g_s(x_{s_i})).$

\subsection{Learning the Optimal Latent Space}
We first aim to learn the optimal latent spaces $L_s$ and $L_t$ such that the same classifier be used for both the source and target by minimizing the cost of transporting the samples in the latent space of source domain to that of the target domain. This leads to learning of latent spaces $L_s = g_s(X_s)$ and $L_t = g_t(X_t)$ with minimum discrepancy. The function that must be optimized is given as,
\begin{equation}
	\min_{f, g_s, g_t}  C_{Loss} (Y_s, \hat{Y}_s) + \lambda_1 T_{Loss}(L_s, L_t) , 
	\label{TLoss}
\end{equation}
where, $T_{Loss}(L_s,L_t) = \sum_{i,j} \gamma^\star_{ij}C_{ij}$. $\gamma^\star$ is the optimal transport mapping for going from $L_{t_i}$ to $L_{s_j}$, and $C_{ij}$ is the corresponding cost. The determination of $\gamma^\star$ is further discussed in Section \ref{eff_optimization}. $\lambda_1 $ in Equation \ref{TLoss} is a hyperparameter which controls the importance of the second term with respect to the first one. 

To ensure an optimal adaptation of the source domain classifier to that of the target domain, we proceed to minimize the cost for transporting between $L_s$ and $L_t$, while safeguarding the invariance of the predictive  power of the source domain classifier $f(\cdot)$ when applied to target domain, i.e. $P(Y_s|L_s) \approx P(Y_t|L_t)$. To best integrate this constraint, we opt to include the classification  cost in the loss to be optimized as a nonlinear  transformation of the latent representation  of the input data. A probabilistic interpretation of such an augmentation of the divergence/discrepancy loss, as the energy function of a Boltzmann distribution. In sum, the overall transportation loss may be written as, 
\begin{equation}
	\min_{f, g_s, g_t}  C_{Loss} (Y_s, \hat{Y}_s) + \lambda_1 T_{Loss}([L_s; f(L_s)], [L_t; f(L_t)]). 
	\label{TLoss_labels}
\end{equation}
Hence, $C_{ij}$ is now the cost of transporting the vector $[L_{t_i}; f(L_{t_i})]$ to $[L_{s_j}; f(L_{s_j})]$.

While the source domain data and their associated labels regularize the problem and reduce the search space, it is also practically important to regularize the target domain whose labels are unknown. To that end, we constrain the entropy of the predicted target labels, thereby not only tying the  target latent representation but also its associated labels, thus reducing the search space again. This entropy criterion in some sense encourages the model to make more confident decisions on the target space, resulting in the overall transport loss as,
\begin{equation}
\begin{split}
		\min_{f, g_s, g_t}  C_{Loss} (Y_s, \hat{Y}_s) + \lambda_1 T_{Loss}([L_s; f(L_s)], [L_t; f(L_t)]) \\
		+ \lambda_2 H(\sigma(f(g_t(X_t)))), 
\end{split}
\label{Full_opt}
\end{equation}
where, $\sigma$ is the softmax function, $H(z) = \sum_{i} - z_i \log z_i$, and $\lambda_1 \text{ and } \lambda2$ control the contributions of the last two terms.

\subsection{Discrete Optimal Transport Problem: An Efficient Resolution}
\label{eff_optimization}
The solution of the proposed model evolves along two directions: the first solves for the optimal transport map $\gamma^\star$ while keeping the functions $g_s(\cdot), g_t(\cdot),$ and $f(\cdot)$ constant, and the next one solves for the functions $g_s(\cdot), g_t(\cdot),$ and $f(\cdot)$ as in Equation \ref{Full_opt}.  

In order to compute $\gamma^\star$, we proceed to solve the following optimization problem,
\begin{gather}
	\nonumber \gamma^\star = arg \min_{\gamma \in \mathbb{R}_{+}^{N_s \times N_t}	} \sum_{i,j} \gamma_{ij} C_{ij} \\
	subject \text{ } to: \gamma^T\mathbf{1} = \mu_s; \text{ } \gamma \mathbf{1} = \mu_t	 
	\label{primalgamma_star}
\end{gather}
Assuming $N_s=N_t=N$, there are $N^2$ unknowns and $2N$ constraints. This at best leads to a computational complexity of $O(N^3\log N)$. to address this difficulty, accounting for the fact that the transport plan $\gamma$ is sparse (at most $2N-1$ non-zero elements) and its support known, would reduce the search space. The dual of Equation \ref{primalgamma_star}, which is the dual discrete form of the Kantorovic formulation can be written as,
\begin{gather}
	\nonumber \max_{\phi, \psi} \sum_{i=1}^{N_s} \phi_i \mu_{s_i} + \sum_{j=1}^{N_t} \psi_j \mu_{t_j} \\
	subject \text{ } to: \phi_i + \psi_j \leq C_{ij} \forall i,j,
	\label{dualgamma_star}
\end{gather}
where the support is $(i,j)$ for which $\phi_i + \psi_j = C_{ij}$. We now have 2N unknowns, but there are $N^2$ constraints. This leads to the same difficulty as in the primal formulation. But, what the dual formulation does allow is the possibility of a Stochatic Gradient Descent (SGD) based approach to perform the optimization.

\begin{prop}
	The dual problem is of the form, 
	\begin{gather}
		\nonumber \min_x \sum_i m_i(x);\text{ } \\
		s.t: x \in \cap_{k=1}^K S_k
	\end{gather}
\end{prop}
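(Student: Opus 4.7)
The plan is to recognize that the proposition is essentially a bookkeeping/reformulation statement: the dual in \eqref{dualgamma_star} is a linear program whose objective is already a sum of simple terms, and whose feasibility set is already an intersection of elementary (half-space) constraints. The only work is to negate the objective to turn the $\max$ into a $\min$, and then to identify the summands $m_i$ and the constraint sets $S_k$ explicitly.

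First I would introduce the stacked dual variable $x=(\phi,\psi)\in\mathbb{R}^{N_s+N_t}$, so that the dual objective
\[
\sum_{i=1}^{N_s} \phi_i \mu_{s_i} + \sum_{j=1}^{N_t} \psi_j \mu_{t_j}
\]
becomes, after negation, a sum of $N_s+N_t$ affine functionals of $x$. I would then define, for example, $m_i(x) = -\phi_i\mu_{s_i}$ for $i=1,\dots,N_s$ and $m_{N_s+j}(x) = -\psi_j\mu_{t_j}$ for $j=1,\dots,N_t$, so that $\sum_i m_i(x)$ equals the negative of the dual objective. Maximizing the original is therefore equivalent to minimizing $\sum_i m_i(x)$.

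Next I would handle the constraints. For each pair $(i,j)\in\{1,\dots,N_s\}\times\{1,\dots,N_t\}$, define the half-space
\[
S_{(i,j)} = \{(\phi,\psi)\in\mathbb{R}^{N_s+N_t} \,:\, \phi_i + \psi_j \le C_{ij}\}.
\]
Each $S_{(i,j)}$ is closed and convex, and the set of dual-feasible points is exactly the intersection $\bigcap_{(i,j)} S_{(i,j)}$. Re-indexing the $N_sN_t$ pairs by a single index $k=1,\dots,K$ with $K=N_sN_t$ yields the required form $x\in\bigcap_{k=1}^K S_k$. Combining the two pieces gives the claimed normal form.

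There is no real obstacle here beyond notation; the content of the proposition is that the dual LP already has the two structural features (separable objective, intersection-of-simple-sets feasible region) that make it amenable to an SGD/projected-subgradient-type scheme, as motivated in the paragraph preceding the proposition. In writing the proof I would be careful only to note that the reformulation is cost-free (no hidden slack variables), so any algorithmic guarantee derived for the abstract problem $\min_x \sum_i m_i(x)$ over $\bigcap_k S_k$ will transfer directly to the recovery of the dual potentials $(\phi^\star,\psi^\star)$, and hence, through complementary slackness, to the support of $\gamma^\star$.
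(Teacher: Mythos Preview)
Your proposal is correct and follows the same spirit as the paper's proof: negate the objective to convert $\max$ to $\min$, identify the summands $m_i$, and recognize each constraint $\phi_i+\psi_j\le C_{ij}$ as a half-space $S_k$. The only difference is bookkeeping: the paper sets $x=\psi_j$ for a fixed $j$ and takes $m_i(x)=-[\phi_i\mu_{s_i}+\psi_j\mu_{t_j}]$, whereas you take $x=(\phi,\psi)\in\mathbb{R}^{N_s+N_t}$ and split the objective into $N_s+N_t$ single-coordinate terms. Your choice is the more standard and transparent one---it makes the ``separable objective over an intersection of half-spaces'' structure explicit and matches the abstract template literally---while the paper's terser identification is tailored to the coordinate-wise SGD update that immediately follows. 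Either way the content is the same trivial reformulation, and your added remark about complementary slackness recovering the support of $\gamma^\star$ is a useful closing observation.
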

\begin{proof}
	Set $x = \psi_j$ for some $j \in \{1,...,N\}$ and let $m_i(x) = -[\phi_i \mu_{s_i} + \psi_j \mu_{t_j}]$. If $S_k$ is the half-space defined by $\phi_i + \psi_j \leq C_{ij}$, we get the dual of the Kantorovic formulation deescribed in Equation \ref{dualgamma_star}. 
\end{proof}
SGD relies on approximation of the gradient $\sum_{i=1}^{N} \nabla m_i(x)$. This is carried out by randomly selecting $i \in [1,...,N]$ and applying $x^{t+1} = proj_S(x^t - \lambda \nabla m_i(x))$. This estimated gradient has a high variance accross samples. In order to stabilize the updates it is important to store the observed gradients in a cumulative manner to improve the estimate of overall gradient. This is done by using a Stochastic Variance Reduction (SVR) methods \cite{ashkan_paper}, $x^{t+1} = proj_S(x^t - \lambda r (\nabla m_i(x)))$, where $r$ refines the gradient estimate. Applying this to Equation \ref{dualgamma_star} the following updates must be performed,
\begin{gather}
	\hat{\phi}_i^{t+1} = \phi_i^t - \lambda r_{\phi}(\mu_{s_i}), \\
	\hat{\psi}_j^{t+1} = \psi_j^t - \lambda r_{\psi}(\mu_{t_j}).
\end{gather} 
The solution is then found by projection onto the half space $\phi_i + \psi_j \leq C_{ij}$. The support for $\gamma$ is now $(i,j)$ for which $\phi_i + \psi_j = C_{ij}$. The optimization for Equation \ref{primalgamma_star} is now over $(i,j) \in A$, where $|A| \leq N_1+N_2-1$, hence significantly reducing the computational complexity. 
\section{Experiments and Results}
\subsection{Domain Adaptation for Computer Vision}
In order to substantiate the described approach we evaluate it on various public datasets that have commonly been used in the literature to demonstrate Domain Adaptation. In each case a CNN is used to realize the functions $g_s(.)$, $g_t(.)$, and $f(.)$. 

The first dataset utilized includes handwritten digits from MNIST, USPS, and SVHN that are to be recognized, with all of the 10 classes. MNIST is used as the source domain and USPS and SVHN are considered the target domain. Figure \ref{MNIST_USPS_SVHN} provides an example of the samples in the dataset. The performance on this dataset is summarized in Tables \ref{Performance_MNIST_USPS} and \ref{Performance_MNIST_SVHN}. As can be observed, the proposed approach boosts the performance in comparison to the state of art in Domain Adaptation. It is also demonstrated that ensuring the minimization of transport cost between the predictions on source and target labels is critical towards achieving a succesful adaptation to the target domain. Figure \ref{Closs} depicts the classification loss for the source (MNIST) and target (USPS) during training. Note that the ground truth labels for the target domain are not used in the training process, but are only used to calculate the classification loss in order to visualize the training progress.

\begin{figure}
	\centering
	\includegraphics[width=0.45\textwidth]{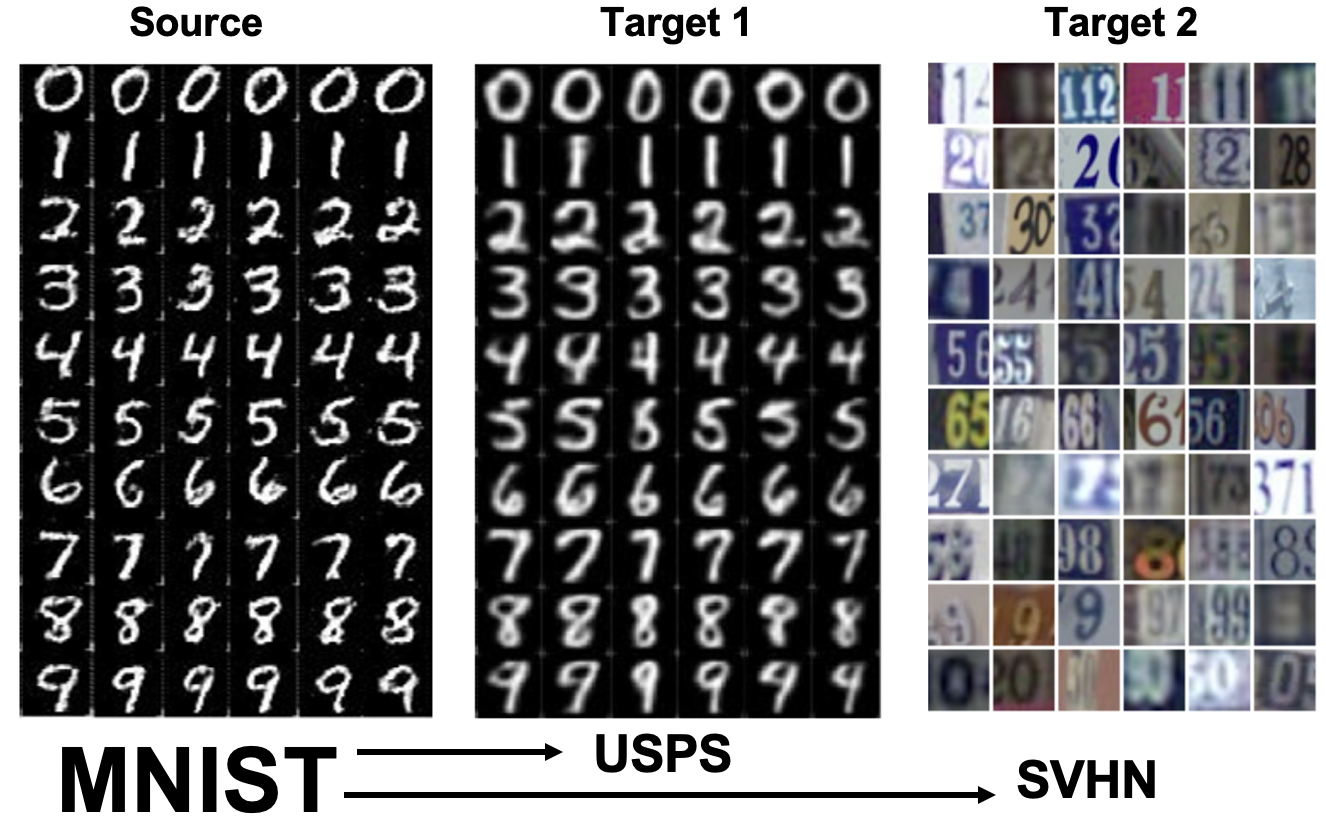}
	\caption{Samples from MNIST,USPS, and SVHN}
	\label{MNIST_USPS_SVHN}
\end{figure}

\begin{figure}
	\centering
	\includegraphics[width=0.45\textwidth]{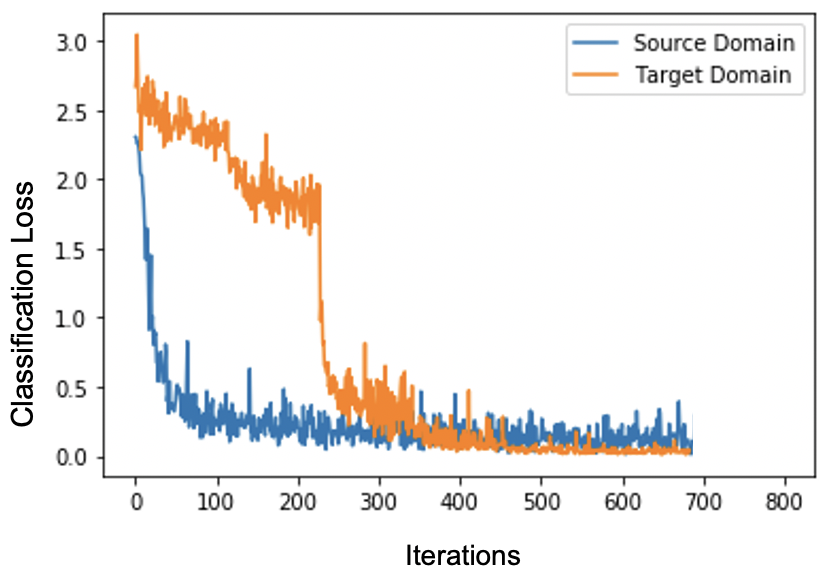}
	\caption{Samples from MNIST,USPS, and SVHN}
	\label{Closs}
\end{figure}

The second dataset that has been evaluated is a slightly more challanging one. This involves an object classification task and consist of images from Amazon (31 classes), DSLR (31 classes), Webcam (31 classes), and Caltech10 (10 classes). The Amazon and DSLR data have higher resolution images while the Webcam and Caltech10 has a lower resolution. Figure \ref{ADWC} shows the examples from this dataset. Table \ref{Indi_Performance_Amazon_DSLR_Webcam_Caltech} shows the performance of each of the domains when a seperate classifier is trained on each of them, with all the ground truth labels assumed to be available. Table \ref{Performance_Amazon_DSLR_Webcam_Caltech} demonstrates the performance when domain adaptation was used assuming the ground truth labels are only available for the source. In each case, the OT-inspired approach demonstrates a superior adaptation performance.
\begin{figure*}[tbp]
	\centering
	\includegraphics[width = 0.75\textwidth]{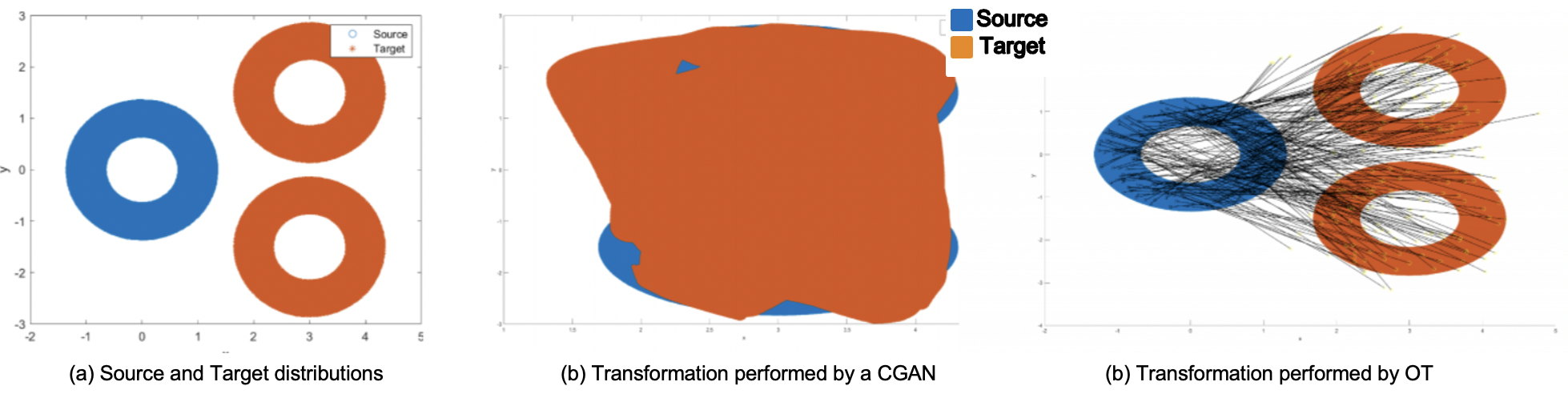}
	\centering
	\caption{An example comparing OT with a CGAN for Shape Morphing}
	\label{circ_2circ}
\end{figure*}
\begin{figure}[h!]
	\centering
	\includegraphics[width=0.5\textwidth]{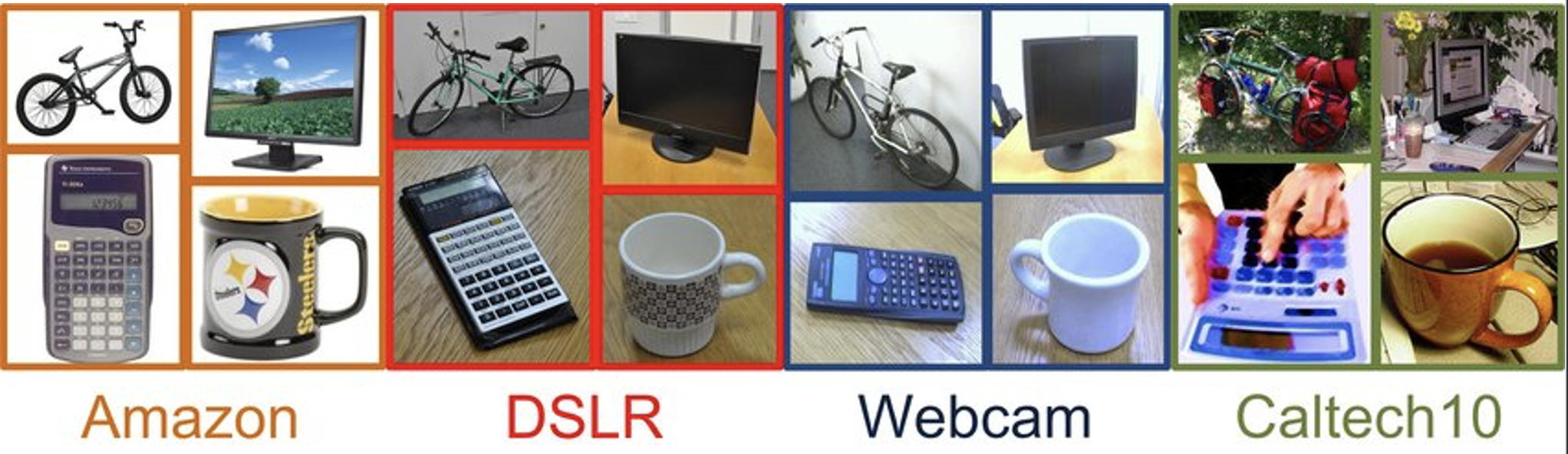}
	\caption{Samples from Amazon, DSLR, Webcam, and Caltech10}
	\label{ADWC}
\end{figure}

\begin{table}
	\begin{center}
		\begin{tabular}{|M{4.1cm}|M{1.5cm}|}
			\hline
			Dataset & Accuracy \\
			\hline
			MNIST & 99.3 \% \\
			\hline
			USPS & 98.4 \% \\
			\hline
			MNIST $\to$ USPS (OT without Labels) & 62.3 \% \\
			\hline
			MNIST $\to$ USPS (OT with Labels) & 87.2 \% \\
			\hline
			\textbf{MNIST $\to$ USPS (OT with labels + Entropy)} & \textbf{96.6 \%} \\
			\hline
			ADDA \cite{advda} & 96.0 \% \\
			\hline
		\end{tabular}
	\end{center}
	\caption{Performance on MNIST $\to$ USPS} 
	\label{Performance_MNIST_USPS}
\end{table}

\begin{table}
	\begin{center}
		\begin{tabular}{|M{3.8cm}|M{2cm}|M{2cm}|}
			\hline
			Dataset & Accuracy (Dot Product) & Accuracy (Wasserstein Distance) \\
			\hline
			MNIST & 99.3 \% & 99.3 \% \\
			\hline
			SVHN & 91.8 \% & 91.8 \% \\
			\hline
			MNIST $\to$ SVHN (OT without Labels) & 48.6 \% & 51.3 \% \\
			\hline
			SVHN $\to$ MNIST (OT without Labels) & 59.4 \% & 66.6 \% \\
			\hline
			MNIST $\to$ SVHN (OT with Labels) & 66.2 \% & 71.6 \% \\ 
			\hline
			SVHN $\to$ MNIST (OT with Labels) & 71.9 \% & 77.9 \% \\
			\hline
			\textbf{MNIST $\to$ SVHN (OT with labels + Entropy)} & \textbf{71.7 \%}  & \textbf{77.2 \%} \\
			\hline
			\textbf{SVHN $\to$ MNIST (OT with labels + Entropy)} & \textbf{78.1 \%}  & \textbf{88.4 \%} \\
			\hline
			MNIST $\to$ SVHN (ADDA \cite{advda}) & - & 76.0 \% \\
			\hline
			MNIST $\to$ SVHN (ADDA \cite{advda}) & - & 86.2 \% \\
			\hline
		\end{tabular}
	\end{center}
	\caption{Performance on MNIST $\to$ SVHN} 
	\label{Performance_MNIST_SVHN}
\end{table}

\begin{table}
	\begin{center}
		\begin{tabular}{|M{3.8cm}|M{2cm}|}
			\hline
			Dataset & Accuracy \\
			\hline
			Amazon & 64.2 \% \\
			\hline
			DSLR & 96.1 \% \\
			\hline
			Webcam & 98.6 \% \\
			\hline
			Caltech10 & 82.7 \% \\
			\hline
			
		\end{tabular}
	\end{center}
	\caption{Performance on Amazon, DSLR, Webcam, Caltech10 when all labels are available}  
	\label{Indi_Performance_Amazon_DSLR_Webcam_Caltech}
\end{table}

\begin{table}
	\begin{center}
		\begin{tabular}{|M{3.8cm}|M{2cm}|M{2cm}|}
			\hline
			Adaptation & Accuracy (OT) & Accuracy (ADDA \cite{advda}) \\
			\hline
			Amazon $\to$ Webcam
 & 86.0 \% & 75.1 \% \\
			\hline
			DSLR $\to$ Webcam & 97.6 \% & 97.0 \% \\
			\hline
			Webcam $\to$ Amazon & 91.2 \% & 88.3 \% \\
			\hline
			DSLR $\to$ Amazon & 90.6 \% & 87.4 \% \\
			\hline
			Amazon $\to$ DSLR & 99.4 \% & 99.0 \% \\
			\hline
			Webcam $\to$ DSLR & 100.0 \% & 
99.6 \% \\
			\hline
			Amazon $\to$ Caltech10
& 87.6 \% & 84.8 \% \\ 
			\hline
			DSLR $\to$ Caltech10
& 86.5 \% & 81.2 \% \\ 
			\hline
			Webcam $\to$ Caltech10
& 82.8 \% & 75.0 \% \\ 
			\hline
		\end{tabular}
	\end{center}
	\caption{Domain Adaptation on Amazon, DSLR, Webcam, Caltech10}  
	\label{Performance_Amazon_DSLR_Webcam_Caltech}
\end{table}

\subsection{Shape Morphing}
In addition to the computer vision applications, we also evaluate the optimal transport approach detailed in Section \ref{eff_optimization} toward shape morphing, and compare it with a Conditional Generative Adversarial Network (CGAN). Shape morphing is the task of converting a source shape defined by points $x_s \in \mathcal{X}_s$ into a target shape $x_t \in \mathcal{X}_t$. If we consider $N_1=N_2=1000$, a standard method minimizing Equation \ref{primalgamma_star} requires memory of 8 MB while the approach detailed in Section \ref{eff_optimization} requires 36 KB. We consider Optimal Transport between curves on $\mathbb{R}^2$ by treating them as distributions. Figure \ref{circ_sq} shows the case with a circular curve as the source and a square as the target along with the computed transport maps.
\begin{figure}
	\centering
	\includegraphics[width=0.5\textwidth]{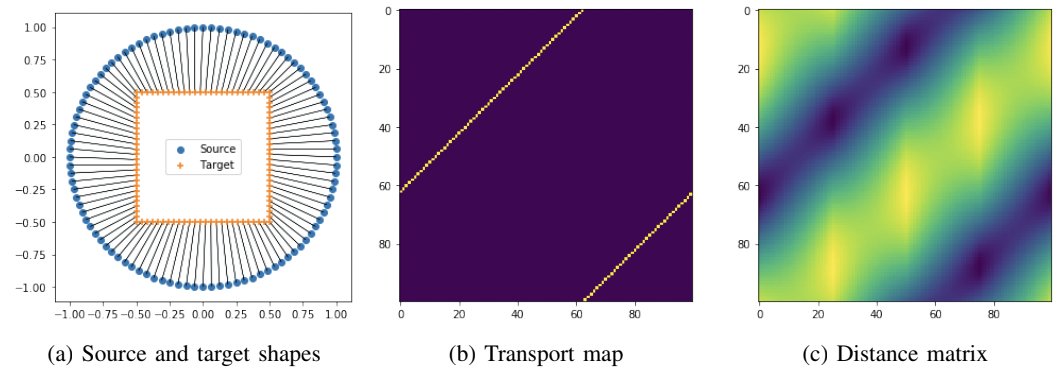}
	\caption{An example for transforming a circle to a square}
	\label{circ_sq}
\end{figure}
In Figure \ref{circ_2circ} we show the case when there are two circles in the target and the source is a single circle. The result of such a transform is consequently compared with a CGAN and it can be observed that the CGAN fails in such a case most likely due to the discontinuity in the input. The failure of the GAN to perform as expected is due to its inherent stability issues which are discussed and addressed in detail in \cite{Krim}.

\section{Conclusion}
In this paper we proposed a Domain Adaptation approach based on Optimal Transport theory with a new efficient OT algorithm with demonstrably greater computational and effective performance, ensuring that a classifier trained on some source domain can still perform at or better than state of the art classification on a target domain that has a different data distribution. We also show that it is important to consider the distribution of the model predictions when learning the transport map. Finally we compare the performance of this OT Domain Adaptation, with the Adversarial Domain Adaptation and show that we can outperform them using this approach. Furthermore we also demonstrate the strength of OT when it comes to shape morphing in comparison to a CGAN. 

\vspace{12pt}

\end{document}